\title{\LARGE \bf
Representing Robot Task Plans as \\
Robust Logical-Dynamical Systems
}
\author{Chris Paxton$^{1}$, Nathan Ratliff$^{1}$, Clemens Eppner$^{1}$, Dieter Fox$^{1}$
\thanks{$^{1}$NVIDIA, USA
        {\tt\small \{cpaxton, nratliff, ceppner, dieterf\} at nvidia.com}}%
}
\DeclareMathOperator*{\argmin}{arg\,min}
\newtheorem{theorem}{Theorem}
\begin{document}

\newcommand{\cpax}[1]{\textcolor{red}{#1}}

\maketitle
\thispagestyle{empty}
\pagestyle{empty}

\begin{abstract}
It is difficult to create robust, reusable, and reactive behaviors for robots that can be easily extended and combined. Frameworks such as Behavior Trees are flexible but difficult to characterize, especially when designing reactions and recovery behaviors to consistently converge to a desired goal condition. We propose a framework which we call Robust Logical-Dynamical Systems (RLDS), which combines the advantages of task representations like behavior trees with theoretical guarantees on performance. RLDS can also be constructed automatically from simple sequential task plans and will still achieve robust, reactive behavior in dynamic real-world environments. 
In this work, we describe both our proposed framework and a case study on a simple household manipulation task, with examples for how specific pieces can be implemented to achieve robust behavior.
Finally, we show how in the context of these manipulation tasks, a combination of an RLDS with planning can achieve better results under adversarial conditions.
\end{abstract}

\section{Introduction}
For robots to solve real problems in unstructured dynamic settings, they must be able to intelligently execute tasks that consist of many interdependent steps. In addition, they must be able to react to changing circumstances, falling back or retrying steps when needed, to ensure that they consistently arrive at the correct goal state despite perturbations, environmental changes, or uncertainty resolution. This means that real-world systems will have a complex interacting set of skills that must be used at the appropriate time to achieve a goal. 

The most common way to build complex behavior is via either manual scripting or hierarchical finite state machines~\cite{bohren2010smach}. These systems, unfortunately, quickly grow in complexity and become difficult to expand and maintain. 
Fallback or recovery behaviors must be programmed manually, requiring substantial engineering work to create truly robust and reactive behavior.
Behavior Trees (BTs) address many of these issues around reactivity and ease of use~\cite{paxton2017costar,paxton2018evaluating,colledanchise2018behavior}. However, BTs have complex internal structure that may produce unexpected effects, and the resulting behavior is often not transparent or easily verifiable. Program flow in a BT is determined by a number of complex operations such as sequence, selector, and decorator nodes~\cite{paxton2017costar,colledanchise2018behavior}.

We propose an alternative system that innately enables fallback behaviors, resulting in quick reactions to changing perceptual inputs without the typical explosion of interconnections that one would observe when designing hierarchical FSMs. 
Our Robust Logical-Dynamical Systems~(RLDS) abstract out the internal structure of these models.
More broadly, RLDS are a type of reactive program which can be automatically constructed from a list of operators with specified preconditions and effects. Logical constraints are propagated through the RLDS after construction to ensure eventual convergence to a goal state.

\begin{figure}[bt]
\includegraphics[height=5.2cm]{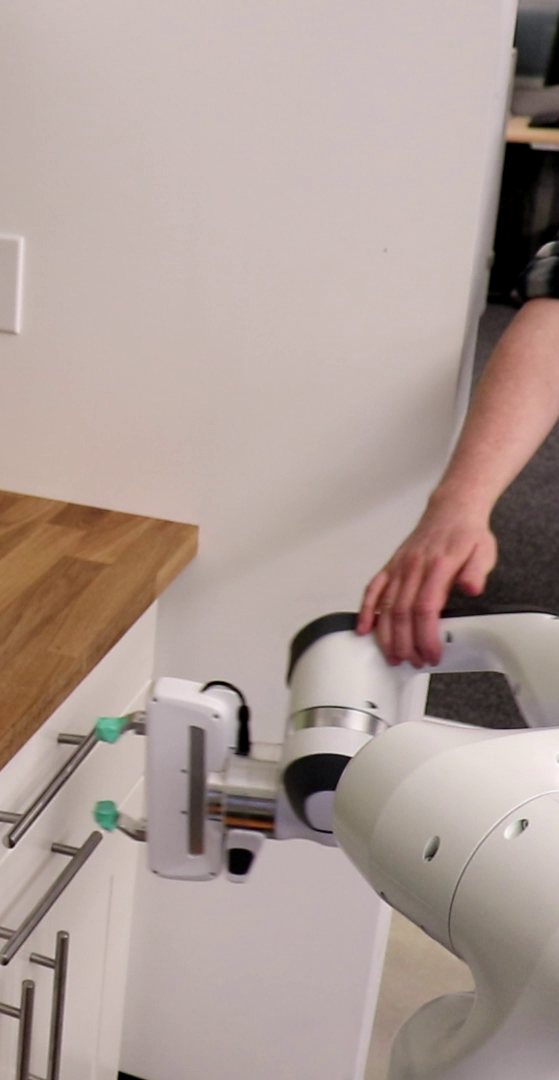}\hfill%
\includegraphics[height=5.2cm]{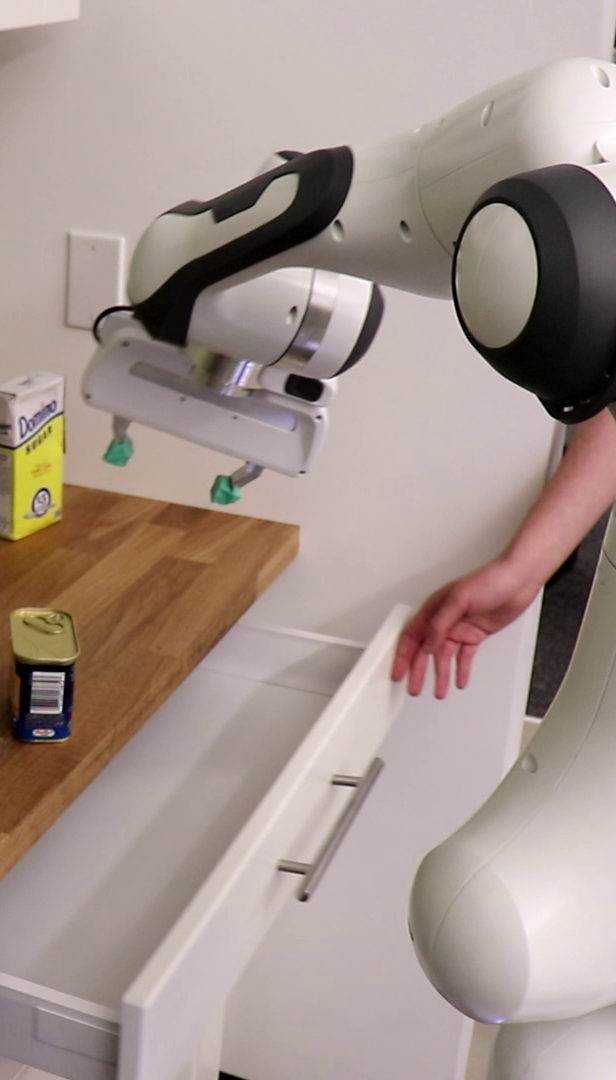}\hfill%
\includegraphics[height=5.2cm]{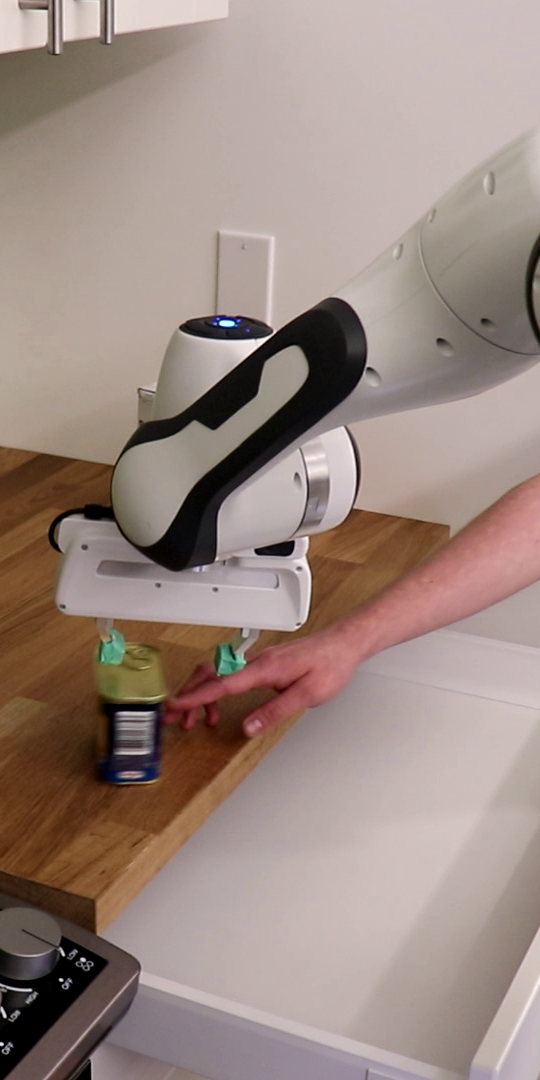}%
\caption{A Robust Logical-Dynamical System (RLDS) can represent reactive and robust behaviors. Here, our system successfully opens a drawer and places a can inside it despite various unforeseen challenges including sensor noise and interference from a human.}
\label{fig:cover}
\vskip -0.5cm
\end{figure}

Take, for example, the simple task of putting away a can of spam in a drawer (Fig.~\ref{fig:cover}). Our robot must open the drawer, pick up a can of soup, place it in the drawer, and close the drawer. 
In an FSM, we would see that nearly every one of these steps can connect to every other step under the right conditions. If the can falls out of the robot's hand, it must attempt to re-grasp it. If someone walks by and slams the drawer shut, the robot must put the can down and re-open it. If that same person walks by and puts the can away, the robot is finished, and doesn't need to open the drawer. All of these conditions make manual task creation fairly complicated, and also make it much harder to compose different tasks and sub-tasks in an intelligent way.
We see many of the same issues in a Behavior Tree; while composition is easier, specifying internal structure and preconditions on each state is still fairly difficult.

Robust Logical-Dynamical Systems (RLDS) are often substantially simpler for a large class of commonplace reactive recovery. Rather than specifying the internal structure of a task, a plan simply lists operators -- high-level task states like ``open the drawer'' -- in order of importance (usually as measured by either sequential proximity to the goal condition or priority as an evasive reaction). Each operator is associated with sets of preconditions, run conditions, and effects, to govern when it is allowed to be executed. RLDS work well with smooth, reactive real-time motion generation tools such as RMPs~\cite{cheng2018rmpflow}; the underlying continuous behavior of each operator is designed to drive the system toward state-transitions which enable us to concretely prove guarantees of the combined system dynamics on convergence to the goal.

Our contributions are:
\begin{itemize}
    \item We present RLDS as a system for reliable task execution which implicitly creates robust recovery behaviors.
    \item We derive an algorithm to automatically compose multiple RLDS, and show how to use RLDS as a part of a simple task planning algorithm.
    \item We prove theoretical convergence and convergence rate results showing that under mild covering conditions on the enterability of the operators RLDS's are guaranteed to converge to the goal condition despite perturbations of bounded probability.
\end{itemize}

We show our system in action on both a simulated and a real-world household object manipulation task, where the robot needs to construct a plan that will pick up one or more objects and place them in a drawer, as shown in Fig.~\ref{fig:cover}.

\section{Related Work}

McGann et al.~\cite{mcgann2009model} divide approaches into three categories: reactive systems, three-layer approaches, and planning-centric approaches. Reactive approaches involve encoding all behavior as a part of the underlying Finite State Machine (FSM) or Behavior Tree (BT). Examples include the controller sequencing~\cite{burridge1999sequential}, hybrid automatons~\cite{egerstedt2000behavior,eppner2016lessons}, SMACH~\cite{bohren2010smach}, and many BT implementations~\cite{paxton2017costar,paxton2018evaluating,hannaford2018behavior,colledanchise2018behavior}. Three-layer approaches divide between high-level deliberative planning, a sequencer in the middle, and low-level behavior, e.g.~\cite{gat1998three}. Planning-centric approaches include~\cite{muscettola2002idea,frank2003constraint,bresina2005activity,mcgann2008deliberative}. Often, for real-world systems, these still rely heavily on manually defined behaviors and hierarchies to achieve robust, reactive behavior, which raises the question of how best to define such behaviors and combine them.

Behavior Trees (BTs) have proven to be a powerful framework for specifying complex behaviors~\cite{colledanchise2018behavior}.
They have been used in medical~\cite{hannaford2018behavior} and industrial applications~\cite{paxton2017costar}. They are user-friendly~\cite{paxton2018evaluating}, with strong analogies to programming languages in their structure~\cite{colledanchise2016behavior}. One of the chief advantages of BTs is that all program behavior is determined by an internal logical state, which means that trees can be easily combined with one another to get robust behavior without a large amount of manual tuning. This is an important characteristic we retain in our system as well: 
conditions are continuously evaluated to determine which actions should be executed~\cite{colledanchise2016behavior}. However, the RLDS completely abstracts out internal details of the task plan, allowing us to specify problems purely in terms of goals and sets of operators as in PDDL~\cite{ghallab1998pddl}.

Linear Temporal Logic (LTL) is similarly a way of specifying complex task constraints~\cite{plaku2016motion}. Creating behaviors which satisfy these constraints can be difficult, however.

Another common way of specifying planning problems is via the Planning Domain Definition Language~\cite{ghallab1998pddl}, which has a very similar structure to our own problem definitions.
BTs commonly use preconditions similar to those placed on our operators to achieve complex behavior~\cite{paxton2017costar,colledanchise2018behavior}, and have been extended in the past to add PDDL-style preconditions and effects for the purposes of planning~\cite{rovida2017extended}.

We also see an analogy to Hierarchical Task Networks~\cite{georgievski2014overview}, which are a framework with the same representative power as PDDL/STRIPS but with hierarchical decomposition used to decrease planning complexity. These are one of the most commonly used system in practice, and
we also use this sort of hierarchical decomposition to enable code reuse and to simplify the correct sequencing constraints.

Task and Motion Planning (TAMP), e.g.~\cite{toussaint2015logic}, is a field that concerns itself with reasoning both about motion about logical constraints. TAMP approaches can solve very complex problems, but aren't designed to generate reactive behavior and aren't concerned with plan composition or plan extensibility. They
generally require online re-planning for reactivity, but their computational complexity can make that intractable.
We aim to avoid replanning in all but the most egregious cases, and instead rely on activating a succession of reactive low-level controllers at the correct times that could in principle include some planning.
In the future, each RLDS could be be constructed from a plan generated by a more computationally intensive TAMP solver; in that context it can be viewed as a way to robustly execute task plans leveraging reactions and systematic plan-operator re-entry with provable goal convergence guarantees.

Also of note, this form of hierarchical, reactive behavior has recently shown up in machine learning methods as well. Neural Task Programming, for example, hierarchically evaluates policies in order to reproduce a task performance in a new environment from a single video demonstration~\cite{xu2018neural}. In the future conditions and policies for an RLDS could be learned.

\section{Overview of Approach}\label{sec:overview}

Our goal is to describe a sequence of tasks in such a way that our system can automatically generate a robust reactive behavior to handle its execution. Intuitively, each task should have an associated logical condition describing whether it can be run (its runnable condition), and its goal should be to push the system toward the runnable condition of the next state. Moreover, these logical conditions should be Markov in the sense that we can classify whether a given state can be run independent of whether we know the history of states that have been run before, and they should (ideally) have a covering property meaning that we can always enter and run at least one of the tasks.

The Markov property enables the system to run any task whose runnable condition is met independent of whether it enters into that logical state via a controlled transition from successful execution of the preceding task or through some random perturbation from either external factors or the execution of reactions. Additionally, it enables us to implement a form of priority on the tasks. Multiple tasks may be runnable simultaneously, so we establish a convention that the most downstream of those takes precedence. If the entire sequence drives the system toward the goal, this convention implements a form of shortcutting since the most downstream task is closest to the goal. However, we may also define additional tasks downstream even from the goal state which act as reactions or evasive maneuvers. Those are always run with highest priority if needed.

RLDS's are agnostic to the specifics of the underlying behavior generation technique. In many cases, we can choose from a collection of low-level policies (e.g. implemented as RMPs), use specialized local feedback policies trained by Reinforcement Learning (RL) or Imitation Learning (IL), or even use guidance from motion plans computed by TAMP a solver used to generate the plan on which the RLDS is constructed. We just need to be able to characterize the behavior of the underlying policy to bound the probability of the policy resulting in a logical state transition, as described in the next section.

Unlike some prior work~\cite{toussaint2015logic}, we don't worry about the ``motion planning'' part of the problem. 
The idea is that we have a lot of specific policies that can do different things, and need to intelligently switch between them to get strong behavior. In theory, each of these policies can be trained on a very narrow set of conditions, e.g. opening or closing a drawer, opening a can, turning a knob, etc. We also assume that our low-level control policies have their own convergence guarantees.


In general, plans are most easily described as a chain of very general logical operators, which doesn't innately describe the full Markov logical state summarizing the result of having executed part of the chain. Sec.~\ref{sec:planning} details how to transform a plan into an RLDS using a simple algorithm that propagates conditions backward through each operator in the plan starting from the goal to generating a least-constraining set of logical conditions that fully describe the required Markov logical states.

\section{Formulation}\label{sec:approach}
Let $x\in\mathcal{X}$ denote the continuous state of the robot and world observable by the perception system, and denote the logical state by $l\in\mathcal{L}$. We can represent $l$ as a vector of binary values giving the truth value of a set of all groundings of logical predicates $\rho_i$. We denote the logical predicates generically as $\rho_i(\tau_1, \dots, \tau_k | x) \in \{0, 1\}$, where $\tau_j$ are their associated terms. I.e. given a continuous state $x$ the predicate takes on a truth value for each valid combination of terms (the predicate grounding); collecting those truth values up across all grounded predicates gives the logical state $l\in\mathcal{L}$.



For example, one predicate used in our manipulation case study from Sec.~\ref{sec:case-study} is 
\texttt{is\_attached\_to(robot\_part, object)}. The terms
\texttt{robot\_part} and \texttt{object} might be grounded by \texttt{end\_effector} and \texttt{sugar\_box}, respectively, giving the grounded predicate \texttt{is\_attached\_to(end\_effector, sugar\_box)} a particular binary truth value. This grounded predicate evaluates to $1$ when the end-effector is holding the sugar box and to $0$ otherwise. 


We denote a logical condition by $L$. Each logical condition can be represented as a logical function of the available predicates. These logical conditions in practice are often implemented simply as a conjunction of a list of predicates. For each logical condition, there is a specific set of continuous states $x$ and associated logical states $l$ which make it true. Denoting the full joint continuous-logical state space (referred to typically as simply the {\it logical state space}) as $\mathcal{S} = \mathcal{X}\times\mathcal{L}$, we often also overload the notation $L$ to denote the set of all logical state vectors that render the logical condition true: $\mathcal{L} \equiv \{s\in\mathcal{S} | L(s) = 1\}$. For instance, if we have two logical conditions $L_1$ and $L_2$ such that $L_1\Rightarrow L_2$, we can say $L_1\subset L_2$. 

\subsection{Robust Logical-Dynamical Chains}

The simplest form of an RLDS is a Robust Logical-Dynamical Chain (RLDC); all RLDS's discussed in this paper (including those produced by hierarchical combination or automatically generated by a planner) can be reduced to a chain. This section presents a formal mathematical construction of the RLDC which is used in the analysis of Section~\ref{sec:TheoreticalAnalysis}. 

The fundamental element of an RLDC is termed an {\it operator} $o\in\mathcal{O}$. We can view each operator as a tuple $o=(L_P, L_R, L_E, \pi)$ of logical conditions and an associated policy $\pi$. $L_P$ is the entry condition defining whether the operator can be entered (if $L_P$ is true, we say the operator is ``enterable''), $L_R$ is the run condition defining whether operator can be continue to be run if it has already been entered (if true, it's said to be ``runnable''), and $L_E$ defines the expected logical condition that results from running the operator.
The distinction between $L_P$ and $L_R$ can be used to implement robust entry into a state to prevent oscillations resulting from stochastic dynamics, allowing the system, for instance, to optionally attempt a maneuver before resetting. In many practical cases $L_P = L_R$.



The policy is defined as $\pi:\mathcal{S}\rightarrow\mathcal{U}$, where $\mathcal{U}$ is the set of control actions. Each $pi$ pushes the system from $L_R$ toward satisfying the operator's effects $L_E$. In conjunction with the system's logical state dynamics $f:\mathcal{S}\times\mathcal{U}\rightarrow\mathcal{S}$, which we require to be Markov in the standard sense,\footnote{Note that the underlying continuous state space typically already has Markov dynamics, but the Markov requirement dictates that the logical state be expressive enough to also maintain this Markov property.} the policy creates a natural system flow
$s_{t+1}=f(s_t,\pi_i(s_t))$ through the logical state space which we can characterize concretely in terms of $L_E$. 

We say a state $s$ is {\it feasible under operator $o_i$}
if it satisfies the runnable condition $L_R^i$. And we call a sequence of feasible states generated by the underlying policy $\pi_i$ a {\it feasible sequence}. Likewise, we say a state sequence {\it terminates} if either an infeasible state is reached (resulting in an {\it infeasible sequence}) or if $L_E^i$ becomes satisfied. Note that there are potentially many policies that can implement the same logical behavior; in many ways the logical behavior is agnostic to policy choice aside from differences in overall convergence properties characterized below. 

We are now equipped to define the RLDC. An RLDC is a sequence of operators  $\vec{o} = \big((o_1),\ldots,(o_N)\big)$ for which the following local chaining properties hold between pairs of operators:
\begin{align} \label{eqn:ChainingCondition}
    L_E^i \Rightarrow L_P^{i+1} \Rightarrow L_R^{i+1}.
\end{align}
These properties state that the effect of operator $o_i$ implies enterability into the next operator $o_{i+1}$, which in turn implies runnability of that operator. Additionally, we call an RLDC {\it complete} if $\cup_i L_P^i=\mathcal{S}$. Note that an operator with an associated policy can be viewed as a {\it task}, so an RLDC can be seen as a sequence of tasks.

At all times the system always enters a downstream operator if possible; additionally, if the logical state becomes infeasible under the current operator for any reason the system transitions into the most downstream enterable operator (which usually ends up being an upstream operator). This gives an implicit priority to the operators, with downstream operators (e.g. those closer to the goal) taking precedence. 

If each $\pi_i$ generates a feasible sequence until achieving $L_E^i$ with probability 1, then the chaining condition of Equation~\ref{eqn:ChainingCondition} implies trivially that the RLDC converges to $L_G$ in $N$ transitions. In general, however, system perturbations, perceptual uncertainty resolution, or stochastic dynamics commonly produce in infeasible sequences resulting in spurious uncontrolled transitions. Section~\ref{sec:TheoreticalAnalysis} analyzes convergence and convergence rate of these chains despite uncontrolled transitions.




\subsection{Theoretical Analysis: Convergence of Chains to $L_G$} \label{sec:TheoreticalAnalysis}

Intuitively, the underlying dynamics of each $\pi_i$ drives the system toward $L_E^i$, resulting in successful forward transitions toward the goal. If the likelihood of backward transitions can be bounded we should be able to prove convergence to the goal. This section makes that observation concrete.

We say $o_i$ induces a {\it controlled transition} with probability $p_i$ if with that probability it generates a feasible sequence terminating in $L_E^i$ satisfied. Similarly, sequences terminating in infeasibility are said to generate {\it uncontrolled transitions}.

\begin{theorem}{(Convergence)} Robust logical-dynamical chains achieve $L_G$ with probability 1 if each logical state $i$ induces a controlled transition with probability $p_i \geq p > 0$, converging exponentially in the number of uncontrolled transitions $k$. Moreover, the system takes an expected number of transitions $T$ upper bounded by $E[T]\leq \frac{N}{p^N}$.
\end{theorem}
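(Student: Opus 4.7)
The plan is to reduce the analysis to counting independent Bernoulli-like \emph{cycles}, where a cycle is a maximal run of consecutive controlled transitions starting from the system's current operator and terminating either at $L_G$ or at the first uncontrolled transition. By the Markov property of $f$ together with the priority rule that resets the system to the most downstream enterable operator after an uncontrolled transition, successive cycles are conditionally independent given their starting operator.

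Next I would lower-bound the probability that a single cycle reaches $L_G$. In the worst case the cycle begins at $o_1$ and must produce $N$ consecutive controlled transitions; by the chaining condition $L_E^i \Rightarrow L_P^{i+1} \Rightarrow L_R^{i+1}$ this is sufficient to reach $L_G$. Because each $p_i \geq p$ and the transitions are Markov, any cycle succeeds with probability at least $p^N$, regardless of its starting operator (starting further downstream only requires fewer successful transitions).

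From here, letting $K$ be the number of uncontrolled transitions before termination, the event $\{K \geq k\}$ is exactly the event that the first $k$ cycles failed, so $P(K \geq k) \leq (1-p^N)^k$. This yields both convergence to $L_G$ with probability $1$ and the claimed exponential rate in $k$. For the expected number of transitions, the number of cycles is dominated by a geometric random variable with success probability $p^N$, giving expected cycle count at most $1/p^N$. Each cycle consists of at most $N$ transitions (either reaching $L_G$ in no more than $N$ steps or being cut short sooner by an uncontrolled transition), so by a standard stopping-time argument $E[T] \leq N/p^N$.

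The main obstacle is rigorously justifying the worst-case "reset to $o_1$" assumption: after an uncontrolled transition the system actually lands at the most downstream enterable operator, which can be well downstream of $o_1$, so the true cycle success probability can exceed $p^N$ and the true cycle length can be less than $N$. This only tightens the bound, but the Markov reasoning needed to formalize that successive cycles may be treated as independent trials sharing a uniform $p^N$ lower bound — independent of where each cycle begins — must be carried out carefully using the Markov dynamics and the priority rule. A secondary subtlety is verifying that uncontrolled transitions terminate cycles cleanly (the system does not linger in an infeasible state), which follows directly from the convention that the system immediately transitions to the most downstream enterable operator whenever its current operator becomes infeasible.
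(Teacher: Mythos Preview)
Your proposal is correct and follows essentially the same approach as the paper: both arguments pessimistically treat every uncontrolled transition as a reset to $o_1$, lower-bound the per-cycle success probability by $p^N$, upper-bound each cycle's length by $N$, and combine these via a geometric law on the number of failed cycles to obtain $E[T]\leq N/p^N$. Your phrasing in terms of a dominating geometric random variable and a stopping-time bound is slightly cleaner than the paper's explicit series evaluation of $\sum_{k\geq 0}(k+1)\gamma^k$, but the underlying decomposition and the worst-case coupling are identical.
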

\begin{proof}
For simplicity, we analyze the case where all $p_i=p$. The probability of never experiencing an uncontrolled transition and reaching the goal in a single run through the chain is bounded below by $p_g = p^N$. Similarly, the probability of experiencing exactly $k$ uncontrolled transitions (i.e. not reaching the goal $k$ times) is bounded by $p_k = (1-p^N)^k p^N$ (wherein the uncontrolled transition is all the way back to the start). Therefore, the probability of successfully reaching the goal with at most $k$ uncontrolled transitions is 
\begin{align*}
P_k 
&= \sum_{i=0}^k p_i = p^N\sum_{i=0}^k(1-p^N)^i \\
&= p^N\left(\frac{1-(1-p^N)^{k+1}}{1-(1-p^N)}\right) = 1-\gamma^{k+1},
\end{align*}
for $\gamma = 1-p^N$. This probability $P_k$ converges exponentially to 1 as $k \rightarrow \infty$. 

Moreover, if the system experiences exactly $k$ uncontrolled transitions before succeeding, the largest number of steps it can take is $k(N-1) + N \leq (k+1)N$. Therefore, the expected number of transitions is 
\begin{align*}
E[T] \leq  \sum_{k=0}^\infty (k+1) N P_k = Np^N\sum_{k=0}^\infty (k+1)\gamma^k,
\end{align*}
where again $\gamma = 1-p^N$. Noting that
\begin{align*}
&\sum_{k=0}^\infty (k+1)\gamma^k
= \sum_{k=0}^\infty \gamma^k + \sum_{k=1}^\infty \gamma^k + \sum_{k=2}^\infty \gamma^k + \cdots \\
&\ \ \ \ \ = \sum_{k=0}^\infty \gamma^k\left(1 + \gamma + \gamma^2 + \cdots\right)
= \left(\frac{1}{1-\gamma}\right)^2,
\end{align*}
the expectation reduces to
\begin{align*}
E[T] \leq \frac{Np^N}{(1-(1-p^N))^2} = \frac{N}{p^N}.
\end{align*}
\end{proof}

As an example, if each transition is successful with probability $.9$ and there are $N=5$ states, we would expect the number of transitions to be $E[T] \leq \frac{5}{.9^5} \approx 8.47 < 9$. The expected (upper bound) inflation factor is given by $\nu=p^{-N}$. For $p=.95$, $N=10$ we get $1.67$; for $p=.8$, $N=4$ we get $2.44$.

\subsection{Execution}

Alg.~\ref{alg:execution} describes execution of a Robust Logical-Dynamical Chain.
We follow the paradigm used in Behavior Trees~\cite{paxton2017costar,colledanchise2018behavior} and in hierarchical learned models (e.g.~\cite{paxton2017combining,huang2018neural,xu2018neural} whereby hierarchical decisions are made constantly and in nearly real-time.
While plan construction takes place offline, we assume that inferring the specific policy to execute occurs at close to real time (e.g. at 10-30 hz). The core of the algorithm, at every time $t$, checks multiple $o_i \in \vec{o}$ to determine the highest-priority operator whose preconditions are met, assigns this to the current operator $o_t$.


\begin{algorithm}[bt]
\caption{Execution of a Robust Logical-Dynamical Chain that achieves goal $L_G$}\label{alg:execution}
\begin{algorithmic}
\Function{Execute}{Operators $\vec{o}$, Goal $L_G$}
\State $o_{0} = \emptyset, t=0, l_0=L(x)$
\While{not $L_G \subseteq l_t$}
\State $t = t + 1$
\State $x_t \leftarrow W$ // get latest continuous world state
\State $l_t \leftarrow L(x)$ // compute predicates on current state
\For{$i \in (\text{length}(\vec{o}), \dots, 1)$}
    \If{$o_i \not= o_{t-1}$ and $L_P^i \subseteq l_i$}
        \State $o_t = o_i$; \textbf{break}
    \ElsIf{$o_{t-1} = o_{i}$ and $L_R^i \subseteq l_i$}
        \State $o_t = o_{t-1}$; \textbf{break}
    \EndIf
\EndFor
\State Step $\pi_i(x_t)$
\EndWhile
\EndFunction
\end{algorithmic}
\end{algorithm}

\subsection{Equivalence to Behavior Trees}

\begin{figure*}[bt]
\centering
\includegraphics[width=1.4\columnwidth]{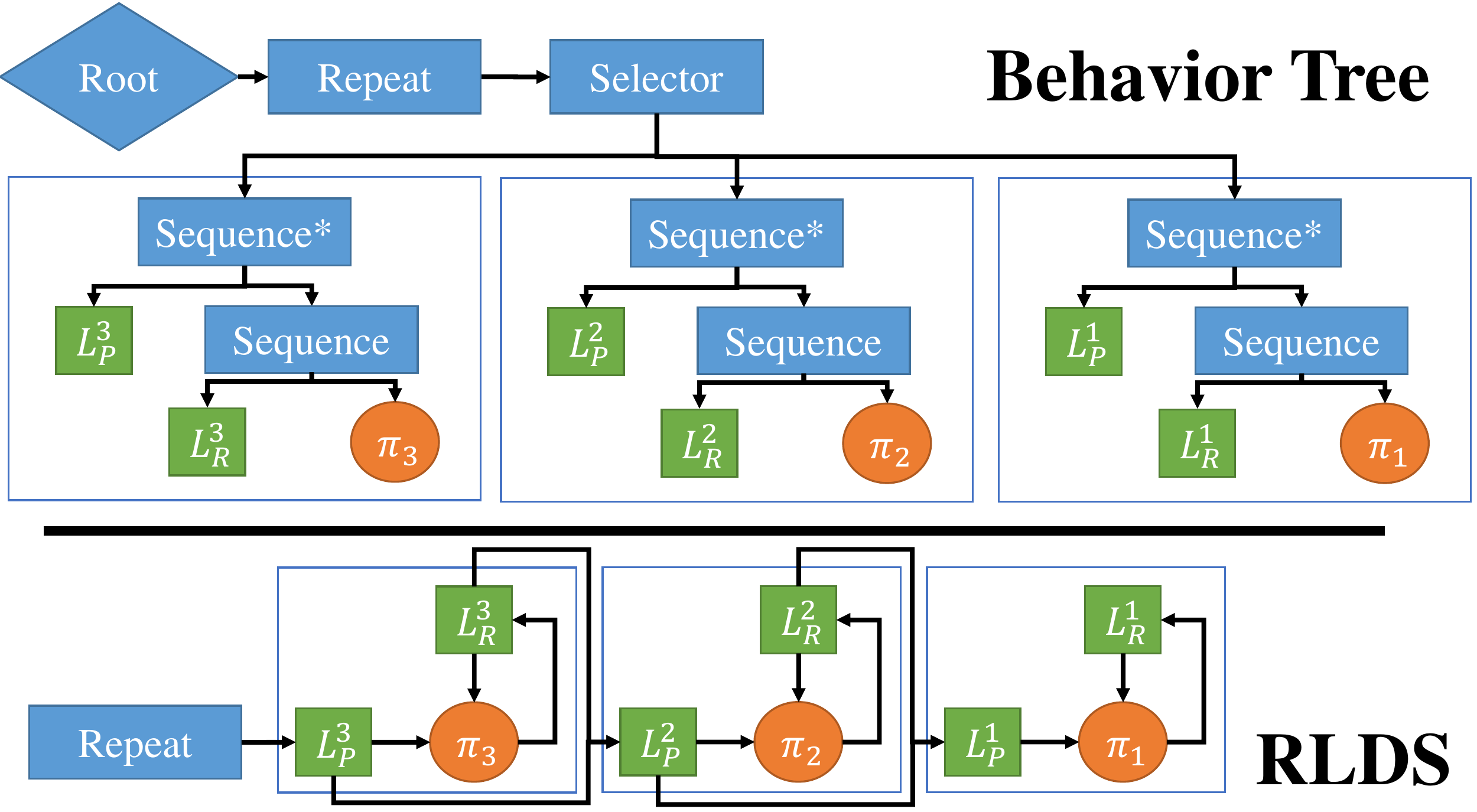}
\caption{Any given RLDS can be modeled as an equivalent Behavior Tree (BT). Likewise, RLDS can be composed in the same manner as BTs.
The Behavior Tree implementation here uses some internal nodes with memory (Sequence*) to denote when a node has passed its preconditions. The RLDS checks entry or run conditions as necessary. The differentiation between run conditions and preconditions is very important for robust execution without oscillations between states.}
\label{fig:bt}
\end{figure*}

This formulation of the RLDS, while very convenient for concisely programming reactive behaviors for robust execution, can be easily shown to be as expressive as Behavior Trees, a common framework for representing hierarchical tasks that has been increasingly popular both in robotics and in other fields such as video games~\cite{paxton2017costar,colledanchise2018behavior}. In a Behavior Tree, flow of operations is determined by internal logical nodes, computed based on the output of child nodes.  We could equivalently express any of these BTs as an RLDS by adding logical predicates to describe internal operations of the tree. Similarly, program flow in an RLDS could be easily described as a BT. Fig.~\ref{fig:bt} shows a BT (top) and an equivalent RLDS (bottom).

The RLDS has a couple additional features, including the addition of the run conditions automating state transitions implementing reactions and recovery behaviors, which while possible in a BT are often cumbersome to program and error prone. Relative to Robust Logical-Dynamical Systems as they have been defined so far, Behavior Trees do have some advantages in that they can be easily composed to produce more complicated behavior. Section~\ref{sec:planning} introduces an analogous form of composibility for RLDS's which greatly improves reusability of subcomponents.

\section{Plan Composition}\label{sec:planning}

While Alg.~\ref{alg:execution} allows us to easily and robustly execute an arbitrary RLDS, it leaves us with one major problem: the conditions must be exhaustively specified, and the solutions are not easy to combine or re-use in new contexts. Fortunately, we have solutions to these problems that show how multiple RLDS can be combined hierarchically or sequentially through different algorithms.

The point of including the effects $L_E(\pi)$ is that they give us two specific advantages. First, we can detect failures, i.e. when a particular policy $\pi$ was unable to reach its goal condition after some amount of time
Second, we can use these to compose plans and compute \emph{implicit conditions} guarding when these policies can be entered and executed in order to guarantee that we will eventually arrive at our logical goal $L_G$.

\subsection{Hierarchical Composition}

Hierarchical composition is a useful capability for any task representation, as it allows for code re-use and increased generalization, and simplified debugging, making programming substantially simpler. One of the chief advantages of BTs over systems like FSMs is how easy it is to compose two BTs~\cite{colledanchise2018behavior}: composition is determined by the logical nodes and structure surrounding the tree, which does not itself need to change or add any new connections.
The general idea remains the same in an RLDS, though for an RLDS the additional logical structure that governs execution is instead given through the predicate sets $L_P$, $L_R$, and $L_E$. 

Given a plan $a = \left\{o_1, \dots, o_n\right\}$, we can create a new operator $o_a$, where $L_E^a$ is the expected cumulative effects of $o_1, \dots, o_n$, and $L_P^a$ and $L_R^a$ are both empty. To create specialized behaviors, we can simply add predicates to these two sets. 
For example, imagine a task where we want to attempt to open a cabinet until that has been accomplished, then pick up an object, as explored in Sec.~\ref{sec:case-study}. We can define a re-usable \texttt{pickup} RLDS as $o_{pickup}$, and then add the {\tt{cabinet\_is\_open}} predicate to $L_P^{pickup}$ and $L_R^{pickup}$ to ensure it is executed at the correct time in the task plan. Then we can specify the plan:
\[
    P = \left\{o_{open}, o_{pickup}, p_{place} \right\}
\]

\subsection{Sequential Composition}

The logical sets corresponding to each of our different operators are not guaranteed to match up to enable them to be arbitrarily sequenced to execute tasks -- in fact, they most likely will not. General-purpose operators have conditions that will almost always be met, leading to oscillations in task execution and a failure to reach a goal state. 

In practice, however, there are certain \textit{implicit} conditions added to our operators by the task plan and the goal. In order to reach that goal, we need to enforce that each operator actually takes us into the precondition space of the remaining sequence of actions.
The implicit conditions $L_I$ are the set of predicates that are set by preceding actions then used by future actions or are a part of the goal. This is enough to restrict the set of logical state trajectories to just those that allow for the task to be accomplished.

For example, take a subtask $a$, such as ``pick up the soup can.'' This includes reactive steps to move to a pre-grasp, cage the object, close the gripper, and lift the object up. In order to combine it with the full plan $\vec{o} = \text{``put the soup in the top drawer''}$ we need to add the necessary boundary conditions that differentiate it from the other steps in the plan, such as the fact that the drawer should be open before we pick up an object.

These conditions can be computed by a simple recursive algorithm that works back from $L_G$ to compute implicit conditions defining the necessary execution order. The core idea is that we want to find predicates $p$ that are required later in a plan, either by $L_G$ or some $L_P$. We propagate the set of implicitly required conditions back from the end of the plan, removing entries in the set when they are expected results of an operator, e.g. when $p \in L_E$, and adding more entries as we see new required preconditions of subsequent operators. In addition, we can create the most general set of implicit conditions by only adding implicit conditions when they are effects of a prior operator in the plan.

\begin{algorithm}[bt]
\caption{Computation of Implicit Conditions}
\label{alg:implicit}
\begin{algorithmic}
\Function{GetImplicitConditions}{$P$, $L_G$}
\State \textbf{given:} Plan $P$, goal $L_G$
\State // backwards pass: compute implicit conditions
\State $N = \text{length}(P)$
\State $L_P^{N+1} = L_G$, $L_I^{N+1} = \emptyset$
\For{$i \in N, \dots, 1$}
    \State $L_{I}^i =  \left\{\rho \in L_P^{i+1} \cup L_{I}^{i+1} \forall \rho \not\in L_E^i \right\}$
\EndFor
\State \Return $L_I$
\EndFunction
\end{algorithmic}
\end{algorithm}

The algorithm for computing the set of implicit conditions on plan execution is given in Alg.~\ref{alg:implicit}.
Once $L_I$ has been computed, we can simply state that for all $o_i \in P$, $L_P^i = L_P^i \cup L_I^i$ and $L_R^i = L_R^i \cup L_I^i$.

\subsection{Planning}

Finally, we discuss how these two components can be combined to automatically generate task plans for execution.
Given that a set of preconditions, effects, and optionally a hierarchical decomposition is given for any particular RLDS domain, we could use any of a variety of STRIPS-style planners~\cite{fikes1971strips} to solve the problem. In addition, an RLDS can easily be phrased as a PDDL planning domain with a goal condition; the run-condition set $L_R$ can generally be ignored when planning.

For our purposes, we use a simple greedy algorithm that is effective in generating solutions in our domain. Given a node in a search tree defined by the logical state $l$ with depth $D(l)$, we apply all possible operators and search according to a simple heuristic $h(l)$, where:
\[
h(l) = D(l) + \|l - \{\rho \forall \rho \in L_G\}\|_1
\]
\noindent to encourage the planner to quickly find short paths.
The complete planning algorithm is given in Alg.~\ref{alg:planning}. Here, the {\sc{Update}} function looks up the search node corresponding to logical state $L$ and updates the back-pointer to the best parent node, as per $A^*$ search, and the {\sc{Backup}} operator takes a logical state $L$ and finds its parents from the tree search.

\begin{algorithm}[bt]
\caption{Simple Task Planning and Execution with Robust Logical-Dynamical Systems}\label{alg:planning}
\begin{algorithmic}
\Function{PlanAndExecute}{$L_0$, $L_G$, $\mathcal{O}$}
\State \textbf{Given:} initial state $L_0$, goal $L_G$, operators $\mathcal{O}$
\State $q =$ \Call{PriorityQueue}{}()
\State Add $L_0$ to $q$
\While{$q \not= \emptyset$}
    \State $L =$ \Call{pop}{$q$} // get lowest cost logical state
    \If{$L_G \subseteq L$}
        \State \textbf{break}
    \EndIf
    \For{$o \in \mathcal{O}$}
        \If{$l \in L_P^o$}
            \State $l' = L_E^o(l)$ // apply effects from operator
            \If{$l' \in T$}
                \State \Call{UpdateNode}{$l'$, parent=$l$}
            \Else
                \State \Call{Push}{$q$, $l'$} // add to queue
            \EndIf
        \EndIf
    \EndFor
\EndWhile
\State $\vec{o} =$ \Call{Backup}{$L$}
\State $L_{I}$ = \Call{GetImplicitConditions}{$\vec{o}$, $L_G$}
\State \Call{Execute}{$\vec{o}$, $L_{I}$}
\EndFunction
\end{algorithmic}
\end{algorithm}

Future work will look at using more advanced planning approaches in place of the simple tree search used here, including the integration of full-scale PDDL planners like FastDownward~\cite{helmert2006fast} for planning of more complex longer-horizon tasks.

\section{Case Study}\label{sec:case-study}

We explore our proposed task representation in the form of a kitchen manipulation system. This robot is expected to pick up objects and move them around in human environments, which necessitates some amount of reactivity.

\subsection{Example Operators}

\begin{figure}[t]
\includegraphics[width=\columnwidth]{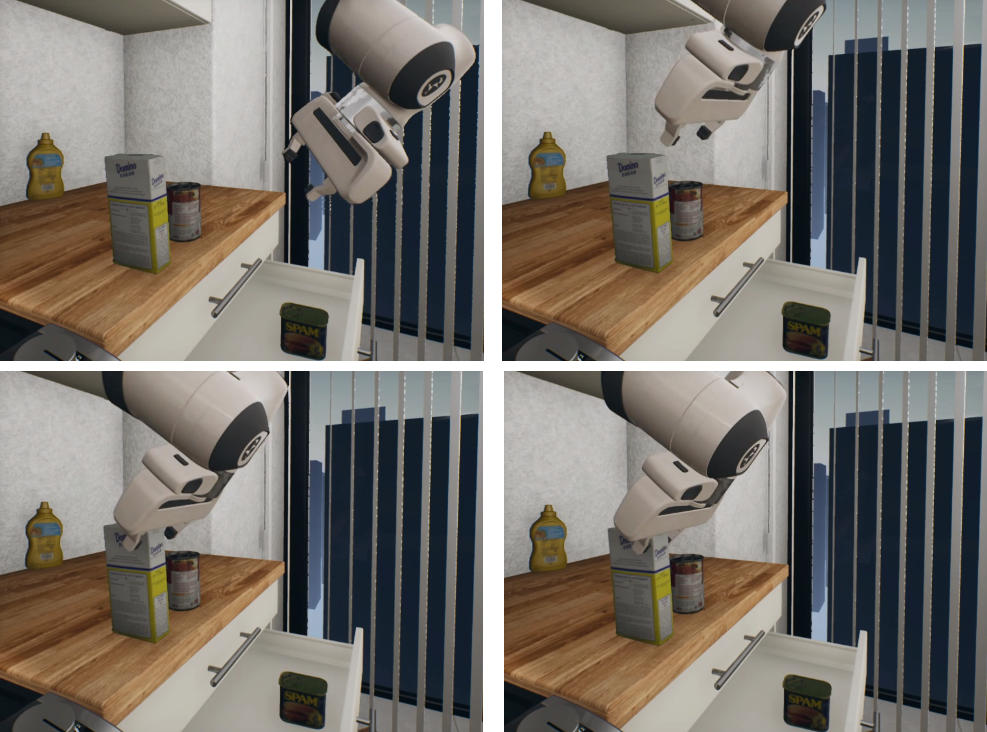}
\caption{An example of grasping behavior from our case study. This corresponds to three low-level operators, each with their own preconditions, run conditions, and effects: \texttt{approach}, \texttt{cage}, and \texttt{grasp}. This sub-sequence can be re-used in many different parts of our task plan.}
\label{fig:grasping}
\vskip -0.5cm
\end{figure}

We will discuss three specific operators implemented in our kitchen manipulation case study: \texttt{approach}, \texttt{cage}, \texttt{grasp}. Each of these operators can be applied to any object defined in the kitchen manipulation domain. These constitute an RLDS whose purpose is to grasp an arbitrary object. The grasping behavior is shown in Fig.~\ref{fig:grasping}.
There are correspondingly three crucial predicates in this RLDS:
\begin{itemize}
    \item \texttt{in\_approach\_region(robot, obj)}, \item \texttt{around\_obj(robot, obj)}, and
    \item \texttt{is\_attached\_to(robot, obj)}.
\end{itemize}
\noindent The \texttt{in\_approach\_region(robot, obj)} predicate defines whether or not the robot is on track to complete a grasp. It defines a cylindrical volume along a line between a standoff position and a known good grasp. \texttt{around\_obj(robot, obj)} defines an error margin around this grasp position, and \texttt{is\_attached\_to(robot, obj)} is true after we have closed our gripper around an object.

The three policies $\pi_{approach}, \pi_{cage}, \pi_{grasp}$ will move us between the continuous state space corresponding to each of these predicates.
Ideally, for an RLDS, all policies will themselves be reactive and able to respond very quickly when objects move or change positions.
Our policies are implemented as Riemannian Motion Policies~\cite{cheng2018rmpflow}.

Policies need to be somewhat flexible, so that they can support multiple grasps and multiple object orientations. We implement a lookup table which will lookup a 6-DOF goal pose $T_g \in SE(3)$ from a list of poses based on some user-provided cost function. With current pose $T_t = (R_t, p_t)$ and goal $T_g = (R_g, p_g)$, where $T_t \in SE(3)$ is the current end-effector pose, with $R$ as the rotation matrix and $p$ as the translation component. Given $R_{t,g} = R_{g}^{-1} R_t = \left(e, \theta\right)$, we choose:
\[
    T_g = \argmin_{T_g} \lambda_p \|p_g - p_t\|_2 + \lambda_R | \theta |
\]
\noindent where $\lambda_p$ and $\lambda_R$ are defined by the problem domain. We also use this same function to determine if the arm is currently in the approach volume, substituting $T_g$ for the closest point along a line between the standoff position and the grasp position. We define motion policies that can robustly move us to any of the pose offsets specified in the domain.

\subsection{Simulation Experiments}

We perform robotic manipulation experiments in a simulated kitchen environment.
The goal is to take a random set of objects and place them inside a randomly chosen drawer in the kitchen domain described above. Policies were manually defined to move to any of a number of user demonstrated grasp locations on the objects.

We use three objects from the YCB~dataset~\cite{calli2017yale}: a can of spam, a sugar box, and a tomato soup can. All of these have to be grasped in a different way and placed into one of two drawers in a kitchen cabinet. Initial object positions are randomized within the reachable workspace of the robot.

To show the benefits of reactivity we compare three different execution strategies: \textit{linear execution} with and without replanning, and our proposed reactive execution algorithm.

\paragraph{Linear execution w/ replanning} We call our planning algorithm once to generate a plan as a sequence of operators, and then compute implicit conditions on the plan as per Alg.~\ref{alg:planning}. This plan can only be executed in order: we check $L_P$ for the current operator $o_i$ and for the subsequent operator $o_{i+1}$. If $L_P^{i+1}$ is met, we move on to the next operator and execute it. If it is not met but $L_R^{i}$ is, we continue executing the current operator. This does not allow the system to repeat sequences of actions, or to adapt to noisy interactions with the world.

\paragraph{Linear execution w/o replanning} If neither $L_P^{i+1}$ nor $L_R^i$ are true at the current timestep, we replan and recompute implicit conditions, then execute again, essentially re-running all of Alg.~\ref{alg:planning}. This case allows the robot to adapt, but is more computationally intensive. To make planning more efficient, we limited ourselves to a subset of all available operators.

\paragraph{Reactive execution} Finally, we demonstrate our results on a reactive plan as per Alg.~\ref{alg:planning}, executed via Alg.~\ref{alg:execution}.

We ran 10 trials, recording duration, success or failure, and number of replanning attempts.
We demonstrate reactivity through adversarial interference: 5 seconds after the robot opens a drawer the first time, the trial pauses and the drawer is closed a random amount. Then execution resumes, and the robot must determine how to handle its new environment.

\subsection{Results}

Table~\ref{tab:results} shows results from simulations. We see that replanning and reactive achieve similar performance on randomly-generated tasks, though replanning may take slightly longer, and that without either replanning or reactivity we could not handle stochastic interactions between the robot and objects in its environment.

The effect of stochastic simulation is particularly apparent in the poor performance of the non-reactive task model:~60\% of the trials fail, even without adversarial interference. As seen in Fig.~\ref{fig:failure}, This often occurs when a grasp ends up in a slightly different pose than expected, meaning that the conditions were not met precisely. In an RLDS, the system would automatically transition back to a known good state (such as movement to a standoff position) and retry the grasp. Another issue is that linear execution does not recognize when execution can jump ahead. The reactive models will instantly adapt if, for example, the can is accidentally knocked into the drawer.

\begin{table}[bt]
    \centering
    \begin{tabular}{l c c}
        \hline
         & Success Rate (\%) & Completion Time (s) \\
         \midrule[1pt]
         \textbf{Baseline} & & \\
         \hspace{1em}Linear Execution & 60\% & $56.11 \pm 8.69$  \\
         \hspace{1em}Linear with Replanning & 100\% & $56.69 \pm 7.32$ \\
         \hspace{1em}Reactive Plan & 100\% & $52.23 \pm 7.93$\\
         \midrule[1pt]
         \textbf{With Interference} & & \\
         \hspace{1em}Linear Execution & 0\% & n/a \\
         \hspace{1em}Linear with Replanning & 100\% & $83.84 \pm 14.10$ \\
         \hspace{1em}Reactive Plan & 100\% & $78.45 \pm 9.84$ \\
         \hline
    \end{tabular}
    \caption{Execution time of plans with random goals under different conditions. Reactive plans are more efficient than replanning and are robust to environmental variations.}
    \label{tab:results}
\end{table}

\begin{figure}[bt]
\centering
\includegraphics[width=0.9\columnwidth]{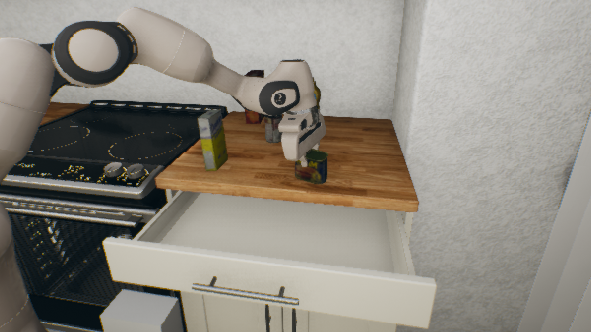}
\caption{Without reactivity, the robot cannot recover from bad approaches or handle the stochasticity of a realistic environment. In this case, the robot attempted to grab a can of spam, but came at a slightly bad angle. In the RLDS, it retries this grasp until it succeeds. In a more traditional system, this results in a failure.}
\label{fig:failure}
\vskip -0.5cm
\end{figure}

In our planning tests, we replanned 7 times out of 10 before adding any interference, and an average of 1.8 times per trial in the second set of experiments. In our simple case, this did not translate to a significant difference in execution time, but in more complex environments replanning would quickly become infeasible. It also allows for less natural, less responsive behavior. In the future, we could imagine much more complex tasks that use a mixture of both planning and RLDS execution to achieve superior performance on difficult real-world problems.

\subsection{Real-World Experiments}

\begin{figure*}
\centering
\includegraphics[width=2\columnwidth]{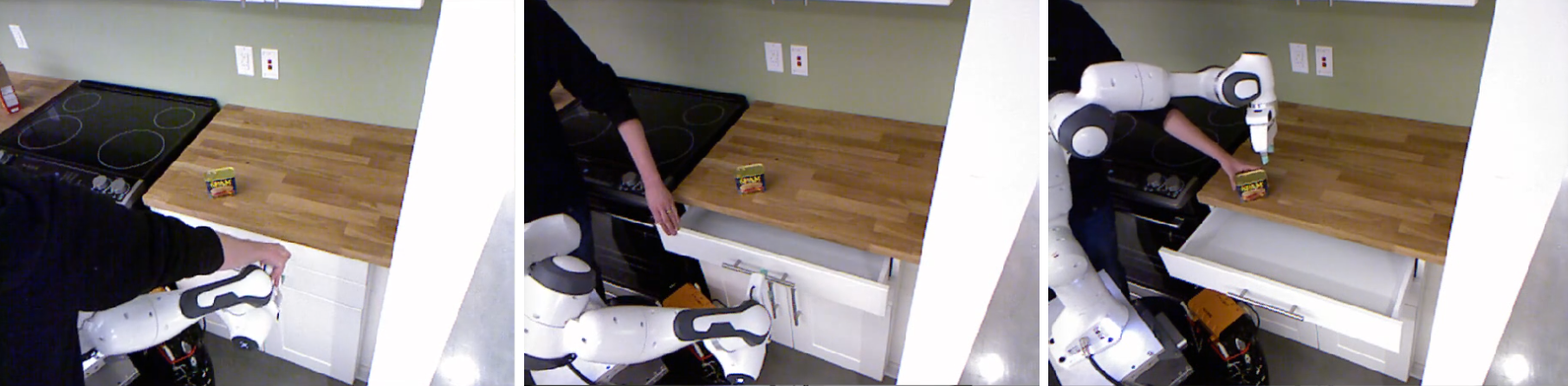}
\caption{In the real world experiments, a human pushed on the robot, shoved its end effector, interfered with the drawer, and moved the object around. The RLDS was able to smoothly adapt to changing circumstances and complete the task regardless.}
\label{fig:real-world}
\end{figure*}

We also tested our RLDS on a similar task in the real world version of the kitchen in Fig.~\ref{fig:grasping}, shown in Fig.~\ref{fig:real-world}. Here, we tested opening a drawer and placing a single YCB object (a can of spam) inside it. This was made more challenging by the use of stochastic perception: the robot needed both to estimate its current position relative to a camera, and to detect and track the object in order to pick it up and place it. This necessitates a more complex task plan.

We executed on a Franka Emika Panda 7-DoF~manipulator. The world state, including the position of the robot and the state of the cabinet, was estimated using DART~\cite{schmidt2014dart}. Objects were detected with PoseCNN~\cite{xiang2017posecnn}. We added new operators to retract the arm and to detect the objects and start tracking. When the arm nears objects, it will ``unfix'' its position in DART, and start jointly optimizing the robot's base position with the rest of the scene. This meant that robust execution was extremely important, and that the robot sometimes needed to back off from an attempt to grasp the object or drawer and retry the grasp before succeeding.

We also introduced an adversary during execution. A user shoved on the drawer to close it after the robot opened it, shoved the robot's hand as it attempted to grasp either the handle or the spam, or pushed the spam out of the way. The RLDS was able to recognize all of these failure cases and backtrack accordingly, retrying the necessary steps in order to complete its task. Experiments are shown in Fig.~\ref{fig:cover}, Fig.~\ref{fig:real-world}, and the accompanied video.\footnote{\url{https://youtu.be/l_8pzcRGztk}}

\section{Discussion and Conclusions}
We proposed a powerful, versatile framework that allows for the creation and composition of reactive robot task plans. The RLDS will automatically capture fallback behaviors and connections between different robot policies that need to be executed, and allow us to adapt to changes in the environment. RLDS can be constructed automatically, and are a useful, composable way to build realistic robot behaviors by capturing them as a series of preconditions, run conditions, and effects. In the future, we will use RLDS to describe a wider range of behavior and look into integration with task and motion planning. 

\bibliographystyle{IEEEtran}
\bibliography{task}

\end{document}